\newtheorem{corollary}{Corollary}
\newtheorem{theorem}{Theorem}
\newtheorem{remark}{\textbf{Remark}}
\newlength\myindent
\newcommand\bindent{%
  \begingroup
  \setlength{\itemindent}{\myindent}
  \addtolength{\algorithmicindent}{\myindent}
}
\newcommand\eindent{\endgroup}
\DeclareMathOperator*{\argmax}{argmax} % thin space, limits underneath in displays
\DeclareMathOperator*{\argmin}{argmin} % thin space, limits underneath in displays
\newtheorem{example}{Example}
\newtheorem{definition}{Definition}
\title{\LARGE \bf
%FTE-MAB: Risk-Averse Best Arm Identification for Finite-Time Exploitations after an Experimentation Phase in Multi-Armed Bandit
Risk-Averse Explore-Then-Commit Algorithms for Finite-Time Bandits
}
\author[1]{Ali Yekkehkhany}
\author[1]{Ebrahim Arian}
\author[2]{Mohammad Hajiesmaili}
\author[1]{Rakesh Nagi}
\affil[1]{University of Illinois at Urbana-Champaign}
\affil[2]{University of Massachusetts Amherst}
\date{}                     %% if you don't need date to appear
\begin{document}

\maketitle
\thispagestyle{empty}
\pagestyle{plain}

%%%%%%%%%%%%%%%%%%%%%%%%%%%%%%%%%%%%%%%%%%%%%%%%%%%%%%%%%%%%%%%%%%%%%%%%%%%%%%%%
\begin{abstract}
%In this paper, we focus on application domains where the best identified arm is exploited once or for a given finite number of times after an experimentation phase. In the experimentation phase, we are free to explore each of the arms for multiple times where the cost of experiments is negligible. We show through an example that although the arm with the highest expected reward is the optimum arm for utilization in infinite number of exploitations, it is not necessarily the one that is most probable to have the highest reward in a single or finite number of exploitations.
%The reason is the mean-variance trade-off of arm rewards.
%We propose the hyper-parameter-free One-Time (Finite-Time) Exploitation Multi-armed Bandit, OTE-MAB (FTE-MAB), algorithm whose objective is to select the arm that is most probable to reward the most. We define a new notion of one-time (finite-time) exploitation regret for our problem setup and find an upper bound for the minimum number of experiments that should be performed to guarantee an upper bound for the regret. We evaluate the performance of our proposed OTE-MAB algorithm versus baselines on risk-averse multi-armed bandit through extensive simulations and observe better performance for our proposed method in terms of the regret we defined for a one-time exploitation application.

In this paper, we study multi-armed bandit problems in an explore-then-commit setting. In our proposed explore-then-commit setting, the goal is to identify the best arm after a pure experimentation (exploration) phase and exploit it once or for a given finite number of times. We identify that although the arm with the highest expected reward is the most desirable objective for infinite exploitations, it is not necessarily the one that is most probable to have the highest reward in a single or finite-time exploitations. Alternatively, we advocate the idea of risk–aversion where the objective is to compete against the arm with the best risk–return trade–off. We propose two algorithms whose objectives are to select the arm that is most probable to reward the most.
Using a new notion of finite-time exploitation regret, we find an upper bound of order $\ln \left ( \frac{1}{\epsilon} \right )$ for the minimum number of experiments before commitment, to guarantee  upper bound $\epsilon$ for regret. As compared to existing risk-averse bandit algorithms, our algorithms do not rely on hyper-parameters, resulting in a more robust behavior,
%in practice
which is verified by numerical evaluations.

%We evaluate the performance of our proposed OTE-MAB algorithm versus baselines on risk-averse multi-armed bandit through extensive simulations and observe better performance for our proposed method in terms of the regret we defined for a one-time exploitation application.
%Previous works on explore-then-commit bandits propose algorithms that identify the arm with the highest expected reward.

\end{abstract}

%%%%%%%%%%%%%%%%%%%%%%%%%%%%%%%%%%%%%%%%%%%%%%%%%%%%%%%%%%%%%%%%%%%%%%%%%%%%%%%%
\section{Introduction}
%The introduction and related work are presented in Subsections \ref{introduction} and \ref{related_work}, respectively.
%\subsection{Introduction}
\label{introduction}
One of the classes of decision making models is the multi-armed bandit (MAB) framework where decision makers learn the model of different arms that are unknown and actions do not change the state of arms \cite{vermorel2005multi}.
The MAB problem was originally proposed by Robbins \cite{robbins1952some}, and has a wide range of applications in finance \cite{bergemann1998dynamic, bergemann2005financing}, health-care \cite{zois2016sequential}, autonomous vehicles \cite{musavi2016unmanned, musavi2016game}, communication and networks \cite{avner2016multi, yekkehkhany2019blind, xie2016scheduling, yekkehkhany2017near, yekkehkhany2017gb}, and energy management \cite{galichet2013exploration, maghsudi2017distributed} to name but a few. In the classical MAB problem, the decision-maker sequentially selects an arm (action) with an unknown reward distribution out of $K$ independent arms. 
The random reward of the selected arm is revealed and the rewards of other arms remain unknown.
At each step, the decision-maker encounters a dilemma between exploitation of the best identified arm versus exploration of alternative arms.
The goal of the classical model of multi-armed bandit is to maximize the expected cumulative reward over a time horizon.

%The best arm returns the maximum expected reward.

In this paper, we focus on a setting where a player is allowed to explore different arms in the  exploration (or experimentation, used interchangeably) phase before committing to the best identified arm for exploitation in one or a given finite number of times.
This setting of interest is motivated by several application domains such as personalized health-care and one-time investment. In such applications, exploitation is costly and/or it is infeasible to exploit for a large number of times, but arms can be experimented by simulation and/or based on the historical data for multiple times with negligible cost \cite{bui2011committing}. The big step in personalized health-care is to provide an individual patient with his/her disease risk profile based on his/her electronic medical record and personalized assessments \cite{chawla2013bringing, pritchard2017strategies}.
The different treatments (arms) are evaluated for a person by simulation or mice trials for many times with a low cost, but one personalized treatment is exploited once  for a patient in the end \cite{priyanka2014survey, abrahams2005personalized}. Another example of one-time exploitation is one-time investment where an investor chooses a factory out of multiple ones. Based on experimentation on historical data, he/she selects a factory to invest in once. The common theme in both above examples is to identify the best arm for one-time exploitation after an experimentation phase of pure exploration.

The above setting falls in the class of MAB problems called \textit{explore-then-commit}.
The previous works \cite{bui2011committing, garivier2016explore, garivier2018explore, prashanth2018cs6046, galichet2013exploration, sani2012risk} on explore-then-commit bandits, to the best of our knowledge, try to identify the arm with \textit{an optimum risk-return criterion} on an expectation sense up to a hyper-parameter. Even though this objective is desirable in the settings with infinite exploitations, it is not necessarily the best objective in the explore-then-commit setting with a single or finite exploitations.
%We use a stronger tool on probability sense to address this issue.
We further elaborate on this observation by an illustrative example in Section~\ref{Problem_Statement}.
We advocate an alternative approach in which the objective is to \textit{select an arm that is most probable to reward the most}. It has been realized that in many scenarios of multi-armed bandits, considering maximum expected reward as an objective to select an arm is not the best strategy. In such scenarios, players not only aim to achieve the maximum cumulative reward, but they also want to minimize the uncertainty such as risk in the outcome~\cite{vakili2016risk}, and scuh approaches are known as \textit{risk-averse} MAB.
%Hence, the expected reward is less important in an explore-then-commit bandit problem with a given finite number of exploitations. More details are given in example \ref{example1} in Section \ref{Problem_Statement}.
%We identify this issue for explore-then-commit finite-time bandits and study risk-averse multi-armed bandit for this setting. %Even though risk-averse bandit and explore-then-commit are studied separately, to the best of our knowledge, their joint consideration is not analyzed.
In literature, there are several approaches to address the risk-averse MAB including mean-variance (MV) \cite{sani2012risk} and the conditional value at risk (CVaR) \cite{galichet2013exploration}. The performance of both MV and CVaR, are highly dependent on different single scalar hyper-parameters, and selecting an inappropriate hyper-parameter might degrade the performance substantially. More details on MV and CVaR criteria are given in Section~\ref{related_work}, and the negative impact of hyper-parameter mismatch is studied in Section~\ref{results}.
%The choice of these hyper-parameters can be tricky and as will be shown in Section \ref{results}; a bad choice can increase the regret dramatically.

%In this paper, we focus on a setting where the best identified arm is exploited once or for finitely many times after an experimentation phase of pure exploration.

%As mentioned earlier, the arm with the highest expected reward is not necessarily the one that is most probable to have the highest reward in a single or some given number of exploitations.
\textbf{Contributions:} We propose a class of hyper-parameter-free risk-averse algorithms (called \texttt{OTE/FTE-MAB}) for explore-then-commit bandits with finite-time exploitations. The goal of the algorithms is to select the arm that is most probable to give the player the highest reward. To analyze the algorithms, we define a new notion of finite-time exploitation regret for our setting of interest.
We provide concrete mathematical support to obtain an upper bound of order $\ln \left ( \frac{1}{\epsilon} \right )$ for the minimum number of experiments that should be done to guarantee upper bound $\epsilon$ for regret. More specifically, our results show that 
by utilizing the proposed algorithms, the regret can be bounded arbitrarily small by sufficient number of experimentations. As a salient feature, the \texttt{OTE/FTE-MAB} algorithm is hyper-parameter-free, so it is not prone to errors due to hyper-parameter mismatch.

%In order to address this issue, we propose an algorithm to identify the best arm for One-Time (Finite-Time) Exploitation in a risk-averse Multi-Armed Bandit problem, OTE-MAB (FTE-MAB) algorithm, where its goal is to select the arm that is most probable to give the player the highest reward.
%We define a new notion of one-time (finite-time) exploitation regret for the finite-time exploitation settings. We provide concrete mathematical support to obtain an upper bound for the minimum number of experiments that should be done to guarantee an upper bound for the regret.
%Utilizing the OTE/FTE-MAB algorithm, the regret can be bounded arbitrarily small by sufficient amount of experimentation. The OTE/FTE-MAB algorithm is hyper-parameter-free, so it is not prone to errors due to hyper-parameter mismatch.

\textbf{Organization of the Paper:}
%The rest of the paper is organized as follows.
Section \ref{related_work} discusses related work.
In Section \ref{Problem_Statement}, the one/finite-time exploitation multi-armed bandit problem after an experimentation phase is formally described. We define a new notion of one/finite-time exploitation regret for our problem setup. An example is provided clarifying the motivation of our work. In Section \ref{section_algo}, we propose the OTE-MAB and FTE-MAB algorithms, and find an upper bound of order $\ln \left ( \frac{1}{\epsilon} \right )$ for the minimum number of pure explorations needed to guarantee upper bound $\epsilon$ for regret.
In Section \ref{results}, we evaluate the OTE-MAB algorithm versus risk-averse baselines and compare the minimum number of experiments needed to guarantee an upper bound on regret for both the OTE-MAB and FTE-MAB algorithms. We conclude the paper with a discussion of opportunities for future work in Section \ref{conclusion_future}.

\section{Related Work}
\label{related_work}
Explore-then-commit bandit is a class of multi-armed bandit problems that has two consecutive phases named as exploration (experimentation) and commitment.
The decision maker can arbitrarily explore each arm in the experimentation phase; however, he/she needs to commit to one selected arm in the commitment phase.
There are several studies on explore-then-commit bandits in the literature as follows.
%Garivier et al. \cite{garivier2016explore} showed that the explore-then-commit policy is suboptimal but it is easy to analyze.
Bui et al. \cite{bui2011committing} studied the optimal number of explorations when cost is incurred in both phases.
%find the optimum number of exploration  consider that the cost of experimentation phase is not negligible. Therefore, they assumed that the number of exploration in experimentation phase is either predetermined or considered as a decision.
Liau et al. \cite{liau2017stochastic} designed an explore-then-commit algorithm for the case where there is a limited space to record the arm reward statistics.
Perchet et al. \cite{perchet2016batched} studied explore-then-commit policy under the assumption that the employed policy must split explorations into a number of batches. None of these works have addressed the risk-averse issue on explore-then-commit bandits. In the following, we present an overview on risk-averse bandits.

There are several criteria to measure and to model risk in a risk-averse multi-armed bandit problem. One of the common risk measurements is the mean-variance paradigm \cite{markowitz1952portfolio}.
The two algorithms MV-LCB and ExpExp proposed by Sani et al. \cite{sani2012risk} are based on mean-variance concept.
They define the mean-variance of an arm with mean $\mu$ and variance $\sigma^2$ as MV$= \sigma^2 - \rho \cdot \mu$, where $\rho \geq 0$ is the absolute risk tolerance coefficient.
In an infinite horizon multi-armed bandit problem, MV-LCB plays the arm with minimum lower confidence bound for estimation of MV.
In a best-arm identification setting, the ExpExp algorithm explores each of the arms for the same number of times and selects the arm with minimum estimated MV.
This approach is followed by numerous researchers in risk-averse multi-armed bandit problems \cite{vakili2016risk, vakili2015mean, yu2013sample, vakili2015risk}.

Another way of considering risk in multi-armed bandit problems is to use conditional value at risk level $\alpha$, CVaR$_\alpha$, where it is the expected policy return in a specified quantile.
CVaR$_\alpha$ is utilized by Galichet et al. \cite{galichet2013exploration} in risk-aware multi-armed bandit problems.
They presented the Multi-Armed Risk-Aware Bandit (MaRaB) algorithm aiming to select the arm with the maximum conditional value at risk level $\alpha$, CVaR$_\alpha$.
%i.e. the expected return in the prescribed quantile.
Formally, let $0 < \alpha < 1$ be the target quantile level and $v_\alpha$ defined as $P(R < v_\alpha) = \alpha$ be the associated quantile value, where $R$ is the arm reward. The conditional value at risk $\alpha$ is then defined as CVaR$_\alpha = \mathds{E} \left [R | R < v_\alpha \right ]$.
%The MaRaB algorithm selects the arm with the maximum lower confidence bound on the estimation of $CVaR\alpha_k$.
CVaR$_\alpha$ is also followed by researchers in multi-armed bandit problems \cite{vakili2016risk, xu2018index, galichet2015contributions, cassel2018general, kolla2019risk}.
\section{Problem Statement}
\label{Problem_Statement}
Consider arms $\mathcal{K} = \{1, 2, \dots, K\}$ whose rewards are random variables $R_1, R_2, \dots, R_K$ that have unknown distributions $f_1, f_2, \dots, f_K$ with unknown finite expected values $\mu_1, \mu_2, \dots, \mu_K$, respectively. The goal is to identify the best arm at the end of an experimentation phase that is followed by an exploitation phase, where the best arm is exploited for a given number of times, $M < \infty$.
%We first study the case where $M = 1$ and postpone the general case to the remark at the end of this section that is a straight-forward extension of our approach for one-time exploitation.
In the experimentation phase, each arm is sampled for $N$ independent times. Denote the observed reward of arm $k \in \mathcal{K}$ at iteration $n \in \{1, 2, \dots, N\}$ of experimentation by $r_{k, n}$.

%Before defining the finite-time exploitation regret, we need to present some preliminaries.
Let $R_k^M = X_1 + X_2 + \dots + X_M,$ where $X_1, X_2, \dots, X_M$ are independent and identically distributed random variables and $X_1 \sim f_k$.
%be the random variable representing the sum of $M$ independent rewards of arm $k$.
The optimum arm for $M$ exploitations in the sense that maximizes the probability of receiving the highest reward is
\begin{equation}
k^* = \underset{k}{\argmax} \ P(R_k^M \geq \boldsymbol{R}_{-k}^M),
\label{k_M_star}
\end{equation}
where $\boldsymbol{R}_{-k}^M = \{R_1^M, R_2^M, \dots, R_{k - 1}^M, R_{k + 1}^M, \dots, R_K^M\}$ and what $R_k^M$ being greater than or equal to a vector means is that it is greater than or equal to all elements of the vector.
Let $p_k^M = P(R_k^M \geq \boldsymbol{R}_{-k}^M)$. Given the above preliminaries, the finite-time exploitation regret is defined below.
\begin{definition}
The finite-time exploitation regret, $r_M(\Delta p)$, is defined as a function of an input $0 < \Delta p < 1$ for the selected arm $\hat{k}$ as
\begin{equation}
r_M(\Delta p) = P \left ( p_{k^*}^M - p_{\hat{k}}^M \geq \Delta p \right ).
\label{finite_time_exploitation_regret}
\end{equation}
\label{regret_definition}
\end{definition}
%The input $\Delta p$ is the tolerance of the player to diverge from $p_{k^*}^M$.
Note that the above definition of regret is different from the commonly used regret in bandit problems.
In the following, an example is presented that motivates to define this new notion of regret for the finite-time exploitation setting.
%Before presenting our own definition of regret , we present an example to motivate it.
%In the following subsection, we present an example that shows the motivation to have our own definition of regret as presented later.

\subsection{Illustrative Example}
As mentioned in the Introduction, although the arm with the highest expected reward is the optimum arm for utilization in infinite number of exploitations, it is not necessarily the one that is most probable to have the highest reward in a single or some finite number of exploitations.
In the following example, two arms are considered such that $\mu_2 > \mu_1$, but it is more probable that a one-time exploitation of the first arm rewards us more than a one-time exploitation of the second arm.
%while the second arm has the higher expected mean.
Hence, arm $\underset{k}{\argmax} \ \mu_k$ is not necessarily the ideal arm for one-time exploitation let alone the arm with the maximum empirical mean, i.e. $\underset{k}{\argmax} \ \frac{\sum_{n = 1}^N r_{k, n}}{n}$.
\begin{example}
Consider two arms with the following independent reward distributions:
\[
\begin{aligned}
&f_1(u) = \alpha e^{-2(u - 3)^2} \cdot \mathds{1} \{ 0 \leq u \leq 10 \} \\
& f_2(u) = \beta \left ( 3e^{-8(u - 1)^2} + 2e^{-8(u - 8)^2} \right ) \cdot \mathds{1} \{ 0 \leq u \leq 10 \},
\end{aligned}
\]
%$\alpha = 1/1.253314136078996, E[R_1] = 3.000000003037942, Var(R_1) = 0.249999990886172$
%$\beta = 1/3.133225802266050, E[R_2] = 3.800073283612729, Var(R_2) = 11.822441992364721$
%$P(R_1 > R_2) = 0.599888421530304$
where $\alpha$ and $\beta$ are constants for which each of the two distributions integrate to one and $\mathds{1}\{.\}$ is the indicator function.
\label{example1}
\end{example}

In example \ref{example1}, although the second arm has a larger mean than the first one, $\mu_2 \approx 3.8$ and $\mu_1 \approx 3$, the variance of reward received from the second arm is larger than that from the first one, which increases the risk of choosing the second arm for a one-time exploitation application. In fact, the first arm with lower mean is more probable to reward us more than the second arm since $P(R_1 \geq R_2) \approx 0.6 > 0.5$.
In general, a larger variance for the received reward is against the principle of risk-aversion where the objective is to keep a balance in a trade-off between the expected return and risk of an action \cite{sani2012risk}. %This issue of risk-aversion is addressed in the literature as discussed next.
Mean-variance is an existing approach to tackle this scenario. However, it has some drawbacks that are explained in details in the following.
%In the following, we discuss this mean-variance trade-off further.

%delta p = 0.199776843060608

%Although the mean-variance concept 	
The mean-variance (MV) of an arm depends on the hyper-parameter $\rho \geq 0$, which is the absolute risk tolerance coefficient.
The trade-off on $\rho$ is that if it is set to zero, the arm with the minimum variance is selected. On the other hand, if $\rho$ goes to infinity, the arm with the maximum expected reward is selected, which is the same as classical multi-armed bandit approach. Although the behavior of mean-variance trade-off is known for marginal values of $\rho$, it is not obvious what value of the hyper-parameter $\rho$ keeps a desirable balance between return and risk.
%The algorithms in multi-armed bandit literature studying the mean-variance trade-off for arm exploitation mostly rely on hyper-parameters.
The choice of this hyper-parameter can be tricky and as will be shown in Section \ref{results}; a bad choice can increase the regret dramatically. As a simple example, consider two arms with unknown parameters $\mu_1 = 10, \sigma_1^2 = 10, \mu_2 = 1, \sigma_2^2 = 1$, and $P(R_1 > R_2) = 1$. The mean-variance trade-off is formalized as $\hat{\sigma}_k^2 - \rho \hat{\mu}_k$, where $\hat{\sigma}_k^2$ and $\hat{\mu}_k$ are empirical estimates of variance and mean of each arm. Note that the empirical means and variances converge to true values, so the second arm that is performing worse with probability one is selected if $\rho < 1$. In order to address this issue, we alternatively propose the following best arm identification algorithm for One-Time (Finite-time) Exploitation in a Multi-Armed Bandit problem (OTE/FTE-MAB algorithm) that has concrete mathematical support for its action and is hyper-parameter-free.
%to be determined in contrast to algorithms using a balance between mean and variance.

\section{One/Finite-Time Exploitation in Multi-Armed Bandit Problems after an Experimentation Phase}
\label{section_algo}
In this section, we propose the OTE-MAB and FTE-MAB algorithms.
The OTE-MAB algorithm is a specific case of FTE-MAB algorithm. Since the proof of theorem related to the FTE-MAB algorithm is notationally heavy, we first propose the OTE-MAB algorithm in Subsection \ref{sub_OTE_MAB} and postpone the FTE-MAB algorithm to Subsection \ref{sub_FTE_MAB}.

\subsection{The OTE-MAB Algorithm}
\label{sub_OTE_MAB}
The OTE-MAB algorithm desires to play the arm that is most probable to reward the most for the case $M = 1$ as
\begin{equation}
k^* = \underset{k}{\argmax} \ P(R_k \geq \boldsymbol{R}_{-k}),
\label{MMMMM}
\end{equation}
which is a specific case of Equation \eqref{k_M_star}. Due to simplicity of notation, the $M$-notation is eliminated in this subsection.
\begin{remark}
A more general version of the OTE-MAB algorithm is to concatenate a constant $c$ to vector $\boldsymbol{R}_{-k}$ as $\boldsymbol{R}_{-k} = \{R_1, R_2, \dots, R_{k - 1}, R_{k + 1}, \dots, R_K, c\}$.
%If there is any hard constraint on the minimum required reward in the one-time exploitation, $c$, we can concatenate that hard constraint to vector $\boldsymbol{R}_{-k}$ and define it as $\boldsymbol{R}_{-k} = \{R_1, R_2, \dots, R_{k - 1}, R_{k + 1}, \dots, R_K, c\}$.
\end{remark}

\begin{algorithm}[t]
\caption{The OTE-MAB Algorithm}
\begin{algorithmic} 
\STATE \textbf{Input} $0 < \epsilon_r, \Delta p < 1$
%\IF{arms are independent}
%\STATE choose $N > \sqrt[K]{\frac{K}{\epsilon_r \cdot {\Delta p}^2}}$
%\ELSE
\STATE choose $N \geq \frac{2 \ln \left ( \frac{2K}{\epsilon_r} \right )}{{\Delta p}^2}$
%\ENDIF
\STATE \textbf{Experimentation Phase:}
\bindent
\FOR{$n = 1$ to $N$}
\STATE $r_{k, n}$ is observed for all $k \in \mathcal{K}$
\ENDFOR
\eindent
\IF{arms are independent}
\STATE Calculate $\hat{p}_k = \frac{\sum_{n_1 = 1}^N \sum_{n_2 = 1}^N \dots \sum_{n_K = 1}^N \mathds{1}\{ r_{k, n_k} \geq \boldsymbol{r}_{-k, n_{-k}} \} }{N^K}$
\ELSE
\STATE Calculate $\hat{p}_k = \frac{\sum_{n = 1}^N  \mathds{1}\{ r_{k, n} \geq \boldsymbol{r}_{-k, n} \} }{N}$
\ENDIF
\STATE \textbf{One-Time Exploitation:}
\bindent
\STATE Play arm $\hat{k} = \underset{k}{\argmax} \ \hat{p}_k$.
\eindent
\end{algorithmic}
\label{OTE-MAB}
\end{algorithm}

Since the reward distributions of the $K$ independent arms are not known, the exact values of $p_k = P(R_k \geq \boldsymbol{R}_{-k})$ are unknown. Hence, estimates of these probabilities, $\hat{p}_k$, are needed to be evaluated based on observations in the experimentation phase as follows:
\begin{equation}
\hat{p}_k = \frac{\sum_{n_1 = 1}^N \sum_{n_2 = 1}^N \dots \sum_{n_K = 1}^N \mathds{1}\{ r_{k, n_k} \geq \boldsymbol{r}_{-k, n_{-k}} \} }{N^K},
\label{p_k_hat_estimate}
\end{equation}
where $\boldsymbol{r}_{-k, n_{-k}} = \big ( r_{1, n_1}, r_{2, n_2}, \dots, r_{k - 1, n_{k - 1}}, r_{k + 1, n_{k + 1}},$ $\dots, r_{K, n_K} \big )$ and rewards of different arms are assumed to be independent.
\begin{remark}
If rewards of different arms are dependent, instantaneous observations of all arms at the same time are needed for $N$ times and $\hat{p}_k$ is calculated as follows:
\begin{equation}
\hat{p}_k = \frac{\sum_{n = 1}^N \mathds{1}\{ r_{k, n} \geq \boldsymbol{r}_{-k, n} \} }{N}.
\label{p_k_dependent}
\end{equation}
\end{remark}
The OTE-MAB algorithm selects arm $\hat{k} = \underset{k}{\argmax} \ \hat{p}_k$ as the best arm in terms of rewarding the most with the highest probability in one-time exploitation. The one-time exploitation regret, $r(\Delta p)$, which is a specific case of Definition \ref{regret_definition}, is defined as follows, where $k^*$ is defined in Equation \eqref{MMMMM}:
\begin{equation}
r(\Delta p) = P \left ( p_{k^*} - p_{\hat{k}} \geq \Delta p \right ).
\label{regret}
\end{equation}
The OTE-MAB algorithm is summarized in Algorithm \ref{OTE-MAB}.
We next present a theorem on an upper bound of the minimum number of experiments needed to guarantee an upper bound on regret of Algorithm \ref{OTE-MAB}.
%the rate at which regret approaches zero as $N$ goes to infinity, 

\begin{theorem}
For any $0 < \epsilon_r, \Delta p < 1,$ if each of the $K$ arms are experimented for $N \geq \frac{2 \ln \left ( \frac{2K}{\epsilon_r} \right )}{{\Delta p}^2}$ times in the experimentation phase, the one-time exploitation regret defined in Equation \eqref{regret} is bounded by $\epsilon_r$, i.e. $r(\Delta p) \leq \epsilon_r$.
Note that simultaneous exploration of the $K$ arms are required in the experimentation phase if arm rewards are dependent.
%$N > \frac{K}{\epsilon_r \cdot {\Delta p}^2}$ number of simultaneous explorations of the $K$ arms are required for the same bound on regret.
\label{theorem1}
\end{theorem}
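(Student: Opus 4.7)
The plan is to reduce the regret bound to a uniform concentration statement about the estimators $\hat{p}_k$ via a standard ``good event'' argument, and then apply Hoeffding's inequality together with a union bound over the $K$ arms.

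First, I would define the good event
\begin{equation*}
G = \bigcap_{k=1}^{K} \left\{ |\hat{p}_k - p_k| < \tfrac{\Delta p}{2} \right\}.
\end{equation*}
On $G$, the optimality $\hat{p}_{\hat{k}} \geq \hat{p}_{k^*}$ chained with two $\Delta p/2$ deviations gives
\begin{equation*}
p_{\hat{k}} > \hat{p}_{\hat{k}} - \tfrac{\Delta p}{2} \geq \hat{p}_{k^*} - \tfrac{\Delta p}{2} > p_{k^*} - \Delta p,
\end{equation*}
so $\{p_{k^*} - p_{\hat{k}} \geq \Delta p\} \subseteq G^c$ and it suffices to prove $P(G^c) \leq \epsilon_r$.

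Next, I would control the per-arm deviation $P(|\hat{p}_k - p_k| \geq \Delta p/2)$. In the dependent case, $\hat{p}_k$ as defined in the displayed estimator for dependent arms is the empirical mean of $N$ i.i.d.\ Bernoulli$(p_k)$ indicators $\mathds{1}\{r_{k,n} \geq \boldsymbol{r}_{-k,n}\}$, so Hoeffding's inequality immediately gives $P(|\hat{p}_k - p_k| \geq \Delta p/2) \leq 2 \exp(-N(\Delta p)^2/2)$. In the independent case, $\hat{p}_k$ is a complete $K$-sample U-statistic of order $(1,1,\ldots,1)$ with a $[0,1]$-valued kernel over the $K$ independent sample blocks, each of size $N$, for which Hoeffding's inequality for multi-sample U-statistics produces the same $2\exp(-N(\Delta p)^2/2)$ tail. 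A union bound over $k \in \{1, \ldots, K\}$ then yields $P(G^c) \leq 2K\exp(-N(\Delta p)^2/2)$, and requiring this to be at most $\epsilon_r$ rearranges precisely to $N \geq \frac{2 \ln(2K/\epsilon_r)}{(\Delta p)^2}$.

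The main technical subtlety lies in the independent case: $\hat{p}_k$ is not a sum of i.i.d.\ indicators but rather averages $N^K$ pairwise-dependent indicators sharing the underlying samples. The cleanest resolution is to invoke the multi-sample form of Hoeffding's inequality, which exploits the independence across the $K$ arm-wise sample sets rather than across the $N^K$ summands. Once concentration is in hand, the remaining pieces (the good-event reduction, the union bound, and solving for $N$) are routine algebra.
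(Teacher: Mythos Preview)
Your proposal is correct and follows the same overall architecture as the paper: define the good event $G=\bigcap_k\{|\hat p_k-p_k|<\Delta p/2\}$, show $\{p_{k^*}-p_{\hat k}\ge\Delta p\}\subseteq G^c$ via the chain $p_{\hat k}>\hat p_{\hat k}-\tfrac{\Delta p}{2}\ge \hat p_{k^*}-\tfrac{\Delta p}{2}>p_{k^*}-\Delta p$, then bound $P(G^c)$ by Hoeffding plus a union bound over the $K$ arms and solve for $N$.

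The only substantive difference is how the concentration of $\hat p_k$ is justified in the independent-arms case. You invoke the multi-sample Hoeffding inequality for U-statistics as a black box; the paper instead writes out an explicit permutation-averaging argument, observing that $\hat p_k$ equals the average over $(N!)^{K-1}$ different ``diagonal'' estimators, each built from $N$ independent indicator samples, and that the Hoeffding interval for any one of these diagonal estimators therefore transfers to their average. Your route is cleaner and more rigorous as stated, since the paper's phrasing (``each is in the interval with probability $\ge 1-\delta$, so the average is'') glosses over the Jensen/convexity step that actually underlies the U-statistic bound; conversely, the paper's argument is self-contained and does not require the reader to know the U-statistic version of Hoeffding. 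Either way, the resulting tail bound $2\exp(-N(\Delta p)^2/2)$ and the final sample-complexity condition are identical.
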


\begin{proof}
Consider the Bernoulli random variables $B_k = \mathds{1}\{R_k \geq \boldsymbol{R}_{-k}\}$ and their unknown means $p_k = \mathds{E}[B_k] = P(R_k \geq \boldsymbol{R}_{-k})$ for $k \in \mathcal{K}$.
%Define the set of optimum arms $\mathcal{K}^*$ as
%\begin{equation}
%\mathcal{K}^* = \{ k \in \mathcal{K}: | p_k - p_{k^*} | < \Delta p %\},
%\end{equation}
%where arm $k^*$ is defined in equation \eqref{OTE-MAB}.
% The parameter $\Delta p$ can also be viewed as the precision we require for the OTE-MAB algorithm to distinguish the maximum $p_k$.
Possessing $N$ independent observations from each of the $K$ independent or dependent arms in the pure exploration phase, the confidence interval derived from Hoeffding's inequality for estimating $p_{k}$ based on Equation \eqref{p_k_hat_estimate} or Equation \eqref{p_k_dependent}  with confidence level $1 - 2e^{-\frac{a^2}{2}}$ has the property that
\begin{equation}
P \left (  \hspace{-0.05cm} p_k \in \left ( \hat{p}_k - \frac{a}{2\sqrt{N}}, \hat{p}_k + \frac{a}{2\sqrt{N}} \right )  \hspace{-0.1cm} \right ) \hspace{-0.05cm} \geq  \hspace{-0.05cm} 1 - 2e^{-\frac{a^2}{2}}, \forall k \in \mathcal{K}.
\label{confidence_interval}
\end{equation}
Note that for the case of dependent arms, there is an $N$-tuple containing the instantaneous observation of the $K$ arm rewards as $\left (r_{1, n}, r_{2, n}, \dots, r_{K, n} \right )$ for $n \in \{1, 2, \dots, N\}$, which is used for estimation of $\hat{p}_k$ in Equation \eqref{p_k_dependent}. On the other hand, for the case of independent arms, any of the $N^K$ orderings of the $N$ observations of the $K$ arm rewards can be used for estimation of $\hat{p}_k$ as is done in Equation \eqref{p_k_hat_estimate}.
However, $\left ( \hat{p}_k - \frac{a}{2\sqrt{N^K}}, \hat{p}_k + \frac{a}{2\sqrt{N^K}} \right )$ cannot be used as confidence interval with confidence level $1 - 2e^{-\frac{a^2}{2}}$. The reason is that, although $\hat{p}_k$ is derived from $N^K$ samples, not all those samples are independent, but
exactly $N$ of the $N^K$ samples are independent.
In fact, the observed independent rewards can be classified as $N$-tuples of the $K$ arm rewards with independent elements in $ N^{k - 1} \times (N - 1)^{k - 1} \times \cdots \times 1^{k - 1} = (N!)^{K - 1}$ different ways. None of such $N$-tuples has any priority over the other ones to estimate $p_k$, so $\hat{p}_k$ can be computed based on any of the $N$-tuples. The estimate of $p_k$ derived from any of those $N$-tuples is in $\left (p_k - \frac{a}{2\sqrt{N}}, p_k + \frac{a}{2\sqrt{N}} \right )$ with probability at least $1 - 2e^{-\frac{a^2}{2}}$, so the average of those estimations is again in the mentioned interval with probability at least $1 - 2e^{-\frac{a^2}{2}}$.
Note that the average of estimates of $p_k$ derived from all of the $(N!)^{K - 1}$ different $N$-tuples is equal to $\hat{p}_k$ derived from Equation \eqref{p_k_hat_estimate} due to the following reason. An element of an $N$-tuple is repeated for $((N - 1)!)^{K - 1}$ times in all $N$-tuples. Hence, averaging over the $\frac{(N!)^{K - 1} \cdot N}{((N - 1)!)^{K - 1}} = N^K$ number of distinct elements of $N$-tuples results in the same answer as the case of averaging the estimates of $p_k$ derived from all of $(N!)^{K - 1}$ different $N$-tuples.
As a result, $\frac{a}{2\sqrt{N}}$ can be used as the half width of the confidence interval for estimators obtained from Equations \eqref{p_k_hat_estimate} and \eqref{p_k_dependent} for both independent and dependent arms.
%\begin{remark}
%In Algorithm \ref{OTE-MAB}, $\hat{p}_k$ derived from Equation \eqref{p_k_hat_estimate} for independent arms is the average of estimates of $p_k$ derived from all of $(N!)^{K - 1}$ different $N$-tuples. However, note that each element of an $N$-tuple is repeated for $((N - 1)!)^{K - 1}$ times. Hence, we do not have a computational complexity of order $(N!)^{K - 1}$.
%Instead, the computational complexity is of order $\frac{(N!)^{K - 1} \cdot N}{((N - 1)!)^{K - 1}} = N^K$.
%\end{remark}

In order to find a bound on regret, defined in Equation \eqref{regret} as $r(\Delta p) = P \left ( p_{k^*} - p_{\hat{k}} \geq \Delta p \right )$, note that
\begin{equation*}
\begin{aligned}
& \left \{ p_{k^*} - p_{\hat{k}} \geq \Delta p \right \} \subseteq \\
& \left \{ \exists k \in \mathcal{K} \text{ such that } p_k \notin \left ( \hat{p}_k - \frac{\Delta p}{2}, \hat{p}_k + \frac{\Delta p}{2} \right ) \right \} \overset{(a)}{\subseteq}
\end{aligned}
\end{equation*}
\begin{equation}
\begin{aligned}
& \left \{ \exists k \in \mathcal{K} \text{ such that } p_k \notin \left ( \hat{p}_k - \frac{a}{2\sqrt{N}}, \hat{p}_k + \frac{a}{2\sqrt{N}} \right ) \right \},
\end{aligned}
\end{equation}
where $(a)$ is true if $\frac{a}{2 \sqrt{N}} \leq \frac{\Delta p}{2}$. By using union bound and Equation \eqref{confidence_interval}, the probability of the right-hand side of the above equation can be bounded as follows, which results in the following bound on regret:
\begin{equation}
r(\Delta p) = P \left ( p_{k^*} - p_{\hat{k}} \geq \Delta p \right ) \leq 2Ke^{-\frac{a^2}{2}} = \epsilon_r.
\end{equation}

The above upper bound on regret is derived under the condition that $\frac{a}{2 \sqrt{N}} \leq \frac{\Delta p}{2}$, which by using $a^2 = 2 \ln \left ( \frac{2K}{\epsilon_r} \right )$ and simple algebraic calculations is equivalent to $N \geq \frac{2 \ln \left ( \frac{2K}{\epsilon_r} \right )}{{\Delta p}^2}$.
%It is not hard to see that for the case that rewards of different arms are dependent, we need more exploration for regret of $\epsilon_r$ as $N > \frac{K}{\epsilon_r \cdot {\Delta p}^2}$, so independence of arms can be of great benefit to us in terms of having a lower regret with a fixed number of explorations.
\end{proof}

According to Theorem \ref{theorem1}, the selected arm by Algorithm \ref{OTE-MAB}, $\hat{k}$, satisfies $p_{\hat{k}} = P \left ( R_{\hat{k}} \geq \boldsymbol{R}_{-\hat{k}} \right ) \geq (p_{k^*} - \Delta p)$ with probability at least $1 - \epsilon_r$ for any $0 < \epsilon_r, \Delta p < 1$, if each of the $K$ arms is explored in the experimentation phase for $N \geq \frac{2 \ln \left ( \frac{2K}{\epsilon_r} \right )}{{\Delta p}^2}$ times. Hence, $p_{\hat{k}}$ can get arbitrarily close to $p_{k^*}$ by increasing the number of pure explorations in the experimentation phase.

Let $p_{(1)}, p_{(2)}, \dots, p_{(K)}$ be the ordered list of $p_1,$ $p_2,$ $\dots,$ $p_K$ in descending order.
%with estimates $\hat{p}_{(1)}, \hat{p}_{(2)}, \dots, \hat{p}_{(K)}$ derived from equation \eqref{p_k_hat_estimate}.
Note that arm $(1)$ is actually arm $k^*$ defined in Equation \eqref{MMMMM}.
Define the difference between the two maximum $p_k$'s as $\Delta p^* = p_{(1)} - p_{(2)}$, where without loss of generality is assumed to be nonzero. Having the knowledge of $\Delta p^*$ or a lower bound on it, a stronger notion of regret can be defined as
\begin{equation}
r = \underset{\Delta p > 0}{\inf} r(\Delta p) = P \left ( \hat{k} \neq k^* \right ),
\label{regret_optimum}
\end{equation}
and have the following corollary.

\begin{corollary}
From the theoretical point of view, upon the knowledge of $\Delta p^*$ or a lower bound on it, for any $0 < \epsilon_r < 1$, the regret defined in Equation \eqref{regret_optimum} is bounded by $\epsilon_r,$ i.e. $r < \epsilon_r$, if the $K$ arms are explored for $N \geq \frac{2 \ln \left ( \frac{2K}{\epsilon_r} \right )}{{\Delta p^*}^2}$ times each.
If arms are dependent, instantaneous explorations of the $K$ arms are needed.
\end{corollary}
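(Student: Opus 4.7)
The plan is to reduce the corollary directly to Theorem~\ref{theorem1} by exploiting the definition of $\Delta p^*$. The key observation is that the event of selecting a suboptimal arm forces the gap $p_{k^*}-p_{\hat{k}}$ to be at least $\Delta p^*$, because any arm other than $k^*$ has probability at most $p_{(2)} = p_{k^*} - \Delta p^*$. Once this inclusion is in place, the result follows by invoking Theorem~\ref{theorem1} with the specific choice $\Delta p = \Delta p^*$.

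First I would note that, since $\Delta p^* > 0$ is assumed and $k^* = \arg\max_k p_k$ is therefore unique, the event $\{\hat{k}\neq k^*\}$ implies $p_{\hat{k}} \leq p_{(2)}$, which in turn implies $p_{k^*} - p_{\hat{k}} \geq p_{(1)} - p_{(2)} = \Delta p^*$. Hence
\[
\{\hat{k}\neq k^*\} \subseteq \{p_{k^*}-p_{\hat{k}}\geq \Delta p^*\},
\]
so $P(\hat{k}\neq k^*) \leq r(\Delta p^*)$. Next I would apply Theorem~\ref{theorem1} with the input pair $(\epsilon_r,\Delta p^*)$: since the hypothesis $N \geq \frac{2\ln(2K/\epsilon_r)}{(\Delta p^*)^2}$ is exactly the one required by Theorem~\ref{theorem1}, one obtains $r(\Delta p^*)\leq \epsilon_r$, and combining with the inclusion above yields $P(\hat{k}\neq k^*)\leq \epsilon_r$, which is the claim.

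To complete the argument it remains to justify the identification $\inf_{\Delta p>0} r(\Delta p) = P(\hat{k}\neq k^*)$ stated in Equation~\eqref{regret_optimum}. This is a short measure-theoretic observation: the events $A_{\Delta p} = \{p_{k^*}-p_{\hat{k}}\geq \Delta p\}$ form a decreasing family as $\Delta p \downarrow 0$ with intersection $\bigcap_{\Delta p>0} A_{\Delta p} = \{p_{k^*}>p_{\hat{k}}\}$, and under uniqueness of $k^*$ this latter event coincides with $\{\hat{k}\neq k^*\}$; continuity of probability from above then gives the infimum equality, and in any case one only needs the weaker one-sided inequality $P(\hat{k}\neq k^*)\leq r(\Delta p^*)$ used above. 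There is no real obstacle in this proof: the only subtle step is the inclusion of events based on the definition of $\Delta p^*$, after which the result is essentially a specialization of the already-proved Theorem~\ref{theorem1}.
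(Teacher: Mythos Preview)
Your core argument is correct and is exactly the derivation the paper intends: the corollary is nothing more than Theorem~\ref{theorem1} specialized to the choice $\Delta p=\Delta p^*$, combined with the inclusion $\{\hat{k}\neq k^*\}\subseteq\{p_{k^*}-p_{\hat{k}}\geq\Delta p^*\}$ (in fact equality holds, since the gap is either $0$ or at least $\Delta p^*$). The paper offers no separate proof of the corollary, so your reduction is precisely what is expected.

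One small slip in your optional final paragraph: the family $A_{\Delta p}=\{p_{k^*}-p_{\hat{k}}\geq\Delta p\}$ is \emph{increasing} as $\Delta p\downarrow 0$, not decreasing, so it is $\bigcup_{\Delta p>0}A_{\Delta p}$ (not the intersection) that equals $\{p_{k^*}>p_{\hat{k}}\}$. Consequently the literal expression $\inf_{\Delta p>0}r(\Delta p)$ in Equation~\eqref{regret_optimum} does not equal $P(\hat{k}\neq k^*)$; the identification that actually holds is $\lim_{\Delta p\downarrow 0}r(\Delta p)=P(\hat{k}\neq k^*)$. As you rightly note, this identification is not needed for the corollary---only the one-sided inequality $P(\hat{k}\neq k^*)\leq r(\Delta p^*)$ is used---so your proof stands.
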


%\begin{remark}
%Possessing $N$ observations for $K$ independent arms, the complexity of computing $\hat{p}_k$'s is of the order $\mathcal{O}(N^K)$ according to algorithm \ref{OTE-MAB} and is of order $\mathcal{O}(N)$ for dependent arms. However, for having a fixed $\epsilon_r$ bound on regret, many more observations are needed for dependent arms. In fact, independent arms need the $K$-th root of the number of pure explorations needed for dependent arms for the same guarantee on regret. Hence, in both cases, the complexity of the OTE-MAB algorithm is of order $\mathcal{O}(\frac{K}{\epsilon_r \cdot {\Delta p}^2})$ for the same guarantee on regret.
%\end{remark}

%\subsection{Finite-Time Exploitation in Multi-Armed Bandit Problem and FTE-MAB Algorithm}

\subsection{The FTE-MAB Algorithm}
\label{sub_FTE_MAB}
Consider the case where an arm is going to be exploited for finite number of times, $M < \infty$.
%As a generalization of the OTE-MAB approach, the optimum arm denoted by $k_M^*$ is defined as
%\begin{equation}
%???????????????????????????
%\end{equation}
%where $R_k^M$ is defined to be the sum of $M$ independent rewards of arm $k$. Let $p_k^M = P(R_k^M > \boldsymbol{R}_{-k}^M).$
The best arm for $M$-time exploitations is defined in Equation \eqref{k_M_star}. Since reward distributions are unknown, $p_k^M$'s are needed to be estimated based on observations in pure exploration phase. In the case of independent arms, define the vector $\mathcal{R}_k^M$ with cardinality $N \choose M$ as
\begin{equation}
\mathcal{R}_k^M \hspace{-0.35mm} = \hspace{-0.35mm} \left \{ \sum_{n \in S_\mathcal{K}} \hspace{-0.6mm} r_{k, n} \text{ s.t. } S_\mathcal{K} \hspace{-0.3mm} \subseteq \hspace{-0.3mm} \{1, 2, \dots, N\} \text{ and } |S_\mathcal{K}| \hspace{-0.3mm} = \hspace{-0.3mm} M \right \} \hspace{-0.35mm} .
\end{equation}
Let $r_{k, j}^M$ for $1 \leq j \leq {N \choose M}$ be the different elements of $\mathcal{R}_k^M$. Let $\hat{p}_k^M$ be the estimate of $p_k^M$, where they can be computed as
\begin{equation}
\hat{p}_k^M = \frac{\sum_{j_1 = 1}^{N \choose M} \sum_{j_2 = 1}^{N \choose M} \dots \sum_{j_K = 1}^{N \choose M} \mathds{1}\{ r_{k, j_k}^M \geq \boldsymbol{r}_{-k, j_{-k}}^M \} }{{N \choose M}^K}.
\label{p_k_hat_estimate_M}
\end{equation}
In the case of dependent arms, $r_{k, j}^M$'s are defined in the same way as independent arms, but note that the set $S_\mathcal{K}$ corresponding to $r_{k, j}^M$ is used for generating $r_{k', j}^M$ for all $k' \in \mathcal{K}$. Hence, $\hat{p}_k^M$ is defined as follows for dependent arms:
\begin{equation}
\hat{p}_k^M = \frac{\sum_{j = 1}^{N \choose M} \mathds{1}\{ r_{k, j}^M \geq \boldsymbol{r}_{-k, j}^M \} }{{N \choose M}}.
\label{p_k_hat_estimate_MMM}
\end{equation}
The FTE-MAB algorithm selects arm $\hat{k} = \underset{k}{\argmax} \ \hat{p}_k^M$ for $M$-time exploitations. This algorithm is summarized in Algorithm \ref{FTE-MAB}.
We next present a theorem for an upper bound of the minimum number of experiments needed to guarantee an upper bound on regret of Algorithm \ref{FTE-MAB} which is the generalization of Theorem \ref{theorem1}.

%By defining the regret for finite-time exploitation multi-armed bandit as $??????????????????????????????$, we have the following theorem which is the generalization of theorem \ref{theorem1}.

\begin{algorithm}[t]
\caption{The FTE-MAB Algorithm}
\begin{algorithmic} 
\STATE \textbf{Input} $0 < \epsilon_r, \Delta p < 1$
%\IF{arms are independent}
%\STATE choose $N > \sqrt[K]{\frac{K}{\epsilon_r \cdot {\Delta p}^2}}$
%\ELSE
\STATE choose $N$ such that $\lfloor \frac{N}{M} \rfloor \geq \frac{2 \ln \left ( \frac{2K}{\epsilon_r} \right )}{{\Delta p}^2}$
%\ENDIF
\STATE \textbf{Experimentation Phase:}
\bindent
\FOR{$n = 1$ to $N$}
\STATE $r_{k, n}$ is observed for all $k \in \mathcal{K}$
\ENDFOR
\eindent
\STATE Let $\mathcal{R}_k^M = \big \{ \sum_{n \in S_\mathcal{K}}$ $r_{k, n} \text{ s.t. } S_\mathcal{K} \subseteq \{1, 2, \dots, N\}$ $\text{ and }$ $|S_\mathcal{K}| = M \big \},$ where $r_{k, j}^M$ for $1 \leq j \leq {N \choose M}$ are the different elements of $\mathcal{R}_k^M$. Let the set $S_\mathcal{K}$ corresponding to $r_{k, j}^M$ be used for generating $r_{k', j}^M$ for all $k' \in \mathcal{K}$.
\IF{arms are independent}
\STATE Calculate $\hat{p}_k^M = \frac{\sum_{j_1 = 1}^{N \choose M} \sum_{j_2 = 1}^{N \choose M} \dots \sum_{j_K = 1}^{N \choose M} \mathds{1}\{ r_{k, j_k}^M \geq \boldsymbol{r}_{-k, j_{-k}}^M \} }{{N \choose M}^K}$
\ELSE
\STATE Calculate $\hat{p}_k^M = \frac{\sum_{j = 1}^{N \choose M} \mathds{1}\{ r_{k, j}^M \geq \boldsymbol{r}_{-k, j}^M \} }{{N \choose M}}$
\ENDIF
\STATE \textbf{Finite-Time Exploitation:}
\bindent
\STATE Play arm $\hat{k} = \underset{k}{\argmax} \ \hat{p}_k^M$.
\eindent
\end{algorithmic}
\label{FTE-MAB}
\end{algorithm}

\begin{theorem}
For any $0 < \epsilon_r, \Delta p < 1,$ if each of the $K$ arms is explored for $N$ times in the experimentation phase such that $\lfloor \frac{N}{M} \rfloor \geq \frac{2 \ln \left ( \frac{2K}{\epsilon_r} \right )}{{\Delta p}^2}$, the finite-time exploitation regret defined in Definition \ref{regret_definition} is bounded by $\epsilon_r$, i.e. $r_M(\Delta p) \leq \epsilon_r$. If the rewards of different arms are dependent, simultaneous explorations of the $K$ arms are required for the same bound on regret.
\label{theorem2}
\end{theorem}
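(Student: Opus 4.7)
The plan is to generalize the proof of Theorem \ref{theorem1} in the natural way, replacing single observations by disjoint blocks of size $M$. First, I would introduce the Bernoulli random variables $B_k^M = \mathds{1}\{R_k^M \geq \boldsymbol{R}_{-k}^M\}$ whose means are exactly $p_k^M$, so that estimating $p_k^M$ becomes a Bernoulli mean-estimation problem. The goal is to establish a confidence interval
\[
P\Bigl(p_k^M \in \bigl(\hat{p}_k^M - \tfrac{a}{2\sqrt{\lfloor N/M \rfloor}},\ \hat{p}_k^M + \tfrac{a}{2\sqrt{\lfloor N/M \rfloor}}\bigr)\Bigr) \geq 1 - 2e^{-a^2/2}, \quad \forall k \in \mathcal{K},
\]
analogous to Equation \eqref{confidence_interval}, and then reuse the Hoeffding/union-bound argument with $a^2 = 2\ln(2K/\epsilon_r)$.

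The key combinatorial observation is that from the $N$ observations $\{r_{k,1},\dots,r_{k,N}\}$ of arm $k$, the indices $\{1,\dots,N\}$ can be partitioned into $\lfloor N/M \rfloor$ pairwise-disjoint blocks of size $M$. The sums on these disjoint blocks form $\lfloor N/M \rfloor$ i.i.d.\ realizations of $R_k^M$, and in the dependent-arms case the disjointness across blocks preserves the per-time-step joint independence across arms. Applying Hoeffding's inequality to the Bernoulli averages $B_k^M$ over $\lfloor N/M \rfloor$ independent blocks produces a confidence interval of half-width $a/(2\sqrt{\lfloor N/M \rfloor})$, which is exactly the half-width needed to match the statement.

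Next I would extend this confidence interval from the block-estimator to the full estimator $\hat{p}_k^M$ defined in Equations \eqref{p_k_hat_estimate_M} and \eqref{p_k_hat_estimate_MMM}. Following the rationale in the proof of Theorem \ref{theorem1}, $\hat{p}_k^M$ can be written as the average, over all equivalent orderings/partitions of $\{1,\dots,N\}$, of estimators each built on $\lfloor N/M \rfloor$ disjoint size-$M$ blocks (and, for independent arms, independent re-orderings across arms). Since the full estimator is the average of block-estimators, each of which lies in the Hoeffding interval with probability at least $1-2e^{-a^2/2}$, the same confidence window applies to $\hat{p}_k^M$ itself, with no loss of confidence level from having ${N \choose M}$ rather than $\lfloor N/M \rfloor$ terms.

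With the confidence interval in hand, the endgame mirrors Theorem \ref{theorem1}: I would write
\[
\bigl\{p_{k^*}^M - p_{\hat{k}}^M \geq \Delta p\bigr\} \subseteq \Bigl\{\exists k\in\mathcal{K}:\ p_k^M \notin \bigl(\hat{p}_k^M - \tfrac{\Delta p}{2},\ \hat{p}_k^M + \tfrac{\Delta p}{2}\bigr)\Bigr\},
\]
and, whenever $a/(2\sqrt{\lfloor N/M \rfloor}) \leq \Delta p/2$, bound the right-hand side by the union bound across $K$ arms to obtain $r_M(\Delta p) \leq 2K e^{-a^2/2} = \epsilon_r$. Setting $a^2 = 2\ln(2K/\epsilon_r)$ and solving the constraint gives $\lfloor N/M \rfloor \geq 2\ln(2K/\epsilon_r)/\Delta p^2$, as required. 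The principal technical obstacle is the averaging argument in the third step: one must verify that the ${N \choose M}^K$-term (independent case) or ${N \choose M}$-term (dependent case) estimator decomposes cleanly into an average of sub-estimators each built from $\lfloor N/M \rfloor$ truly independent block-realizations, so that the Hoeffding half-width scales with $\lfloor N/M \rfloor$ rather than being inflated by the overlapping combinations.
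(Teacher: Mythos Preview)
Your proposal is correct and follows essentially the same approach as the paper's proof: introduce the Bernoulli variables $B_k^M$, note that the $N$ observations yield $\lfloor N/M\rfloor$ independent block-samples of $R_k^M$, invoke the averaging argument from Theorem~\ref{theorem1} to transfer the Hoeffding confidence interval of half-width $a/(2\sqrt{\lfloor N/M\rfloor})$ to the full estimator $\hat{p}_k^M$, and then repeat the set-inclusion plus union-bound endgame verbatim. If anything, you are more explicit than the paper about the disjoint-block partition and the combinatorial averaging step, which the paper dispatches with ``due to the same reasoning in the proof of Theorem~\ref{theorem1}.''
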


The proof of Theorem \ref{theorem2} is similar to that of Theorem \ref{theorem1}, which can be found in the Appendix.

%\begin{remark}
%Using Stirling's approximation, $N \geq M + \sqrt[M]{\frac{M! \cdot K}{\epsilon_r \cdot {\Delta p}^2}}$ in Theorem \ref{theorem2} can be approximated by $N \geq \frac{e + 1}{e}M$ for large $M$.
%\end{remark}

Let $p_{(1)}^M, p_{(2)}^M, \dots, p_{(K)}^M$ be the ordered list of $p_1^M,$ $p_2^M, \dots, p_K^M$ in descending order.
%with estimates $\hat{p}_{(1)}, \hat{p}_{(2)}, \dots, \hat{p}_{(K)}$ derived from equation \eqref{p_k_hat_estimate}.
Note that arm $(1)$ is actually arm $k^*$ defined in Equation \eqref{OTE-MAB}.
Define the difference between the two maximum $p_k^M$'s as $\Delta p^*_M = p_{(1)}^M - p_{(2)}^M$, where without loss of generality is assumed to be nonzero. Having the knowledge of $\Delta p^*_M$ or a lower bound on it, a stronger notion of regret can be defined as
\begin{equation}
r_M = \underset{\Delta p > 0}{\inf} r_M(\Delta p) = P \left ( \hat{k} \neq k^* \right ),
\label{regret_optimum_M}
\end{equation}
and have the following corollary.

\begin{corollary}
From the theoretical point of view, upon the knowledge of $\Delta p^*_M$ or a lower bound on it, for any $0 < \epsilon_r < 1$, the regret defined in Equation \eqref{regret_optimum_M} is bounded by $\epsilon_r,$ i.e. $r_M < \epsilon_r$, if the $K$ arms are explored for $N$ times each, where $ \lfloor \frac{N}{M} \rfloor \geq \frac{2 \ln \left ( \frac{2K}{\epsilon_r} \right )}{{\Delta p^*_M}^2}$.
If arms are dependent, instantaneous explorations of the $K$ arms are needed.
\end{corollary}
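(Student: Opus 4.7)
The plan is to mirror the proof of Theorem~\ref{theorem1}, replacing the Bernoulli indicator $B_k = \mathds{1}\{R_k \geq \boldsymbol{R}_{-k}\}$ by $B_k^M = \mathds{1}\{R_k^M \geq \boldsymbol{R}_{-k}^M\}$, which has mean $p_k^M$. The key point will be that $\hat{p}_k^M$ concentrates around $p_k^M$ at the Hoeffding rate corresponding to $\lfloor N/M \rfloor$ effective independent samples rather than $N$, since each evaluation of $B_k^M$ consumes $M$ raw draws per arm.

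First, I would note that from $N$ independent draws of arm $k$ one can assemble $\lfloor N/M \rfloor$ iid copies of $R_k^M$ by partitioning $\{1,\dots,N\}$ into disjoint blocks of size $M$ and summing within each block. Coupling one such partition across all $K$ arms (using the same block assignment in the dependent case, any matched pairing in the independent case) yields $\lfloor N/M \rfloor$ independent Bernoulli realizations of $B_k^M$. Hoeffding's inequality then gives, for the empirical mean $\tilde{p}_k^M$ built from a single such matched partition,
\begin{equation*}
P\!\left(\left|\tilde{p}_k^M - p_k^M\right| \geq \tfrac{a}{2\sqrt{\lfloor N/M \rfloor}}\right) \leq 2e^{-a^2/2}.
\end{equation*}

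Second, I would argue that $\hat{p}_k^M$ in Equation~\eqref{p_k_hat_estimate_M} is a uniform average of such partition-based estimators $\tilde{p}_k^M$ over the collection of admissible matched partitions. This follows from a symmetry/counting argument analogous to the $(N!)^{K-1}$ accounting in Theorem~\ref{theorem1}: every block sum $r_{k,j}^M$ contributes to the full combinatorial sum the same number of times. Because each $\tilde{p}_k^M$ lies in $\bigl(p_k^M - \tfrac{a}{2\sqrt{\lfloor N/M \rfloor}},\, p_k^M + \tfrac{a}{2\sqrt{\lfloor N/M \rfloor}}\bigr)$ with probability at least $1 - 2e^{-a^2/2}$, so does their convex combination $\hat{p}_k^M$. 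For dependent arms, Equation~\eqref{p_k_hat_estimate_MMM} already exposes $\lfloor N/M \rfloor$ independent block pairs directly, so the same bound applies without the averaging step. The event-containment argument then carries over unchanged: $\{p_{k^*}^M - p_{\hat{k}}^M \geq \Delta p\}$ lies inside the event that some $p_k^M$ escapes a window of half-width $\Delta p/2$ around $\hat{p}_k^M$, which in turn lies inside the Hoeffding-width event whenever $\frac{a}{2\sqrt{\lfloor N/M \rfloor}} \leq \frac{\Delta p}{2}$. A union bound over $k \in \mathcal{K}$ yields $r_M(\Delta p) \leq 2K e^{-a^2/2}$; setting this equal to $\epsilon_r$ gives $a^2 = 2\ln(2K/\epsilon_r)$, and the width constraint becomes $\lfloor N/M \rfloor \geq \frac{2\ln(2K/\epsilon_r)}{\Delta p^2}$, as claimed.

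The main obstacle is the combinatorial bookkeeping in the second step. The sum in Equation~\eqref{p_k_hat_estimate_M} ranges over $\binom{N}{M}^K$ tuples of block sums that overlap in intricate ways, and I need to identify the precise analogue of the $(N!)^{K-1}$ counting from Theorem~\ref{theorem1}, namely the number of matched disjoint-block partitions per $K$-tuple and the multiplicity with which each block sum appears, so that the uniform average of partition-based estimators provably coincides with $\hat{p}_k^M$. Once this bookkeeping is nailed down, the rest of the argument is a direct transcription of the $M=1$ proof.
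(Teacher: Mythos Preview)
Your proposal is essentially the paper's own proof of Theorem~\ref{theorem2} (given in the Appendix), which is the entire substance behind the Corollary: same Bernoulli variables $B_k^M$, same Hoeffding rate $\lfloor N/M\rfloor$, same event-containment chain and union bound, and the paper likewise handwaves the combinatorial bookkeeping you flag as the main obstacle by writing ``Due to the same reasoning in the proof of Theorem~\ref{theorem1}.'' The only line you omit is the trivial specialization from Theorem~\ref{theorem2} to the Corollary: since $\hat k\neq k^*$ forces $p_{k^*}^M - p_{\hat k}^M \geq \Delta p_M^*$, one has $r_M = r_M(\Delta p_M^*)$, and plugging $\Delta p = \Delta p_M^*$ into the Theorem~\ref{theorem2} bound gives the claim directly.
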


\begin{corollary}
If $M$ converges to infinity, the problem becomes the classical multi-armed bandit problem since $\underset{k}{\argmax} \ P(R_k^M \geq \boldsymbol{R}_{-k}^M)$ is the same as $\underset{k}{\argmax} \ P \left ( \frac{R_k^M}{M} \geq \frac{\boldsymbol{R}_{-k}^M}{M} \right )$ and due to the law of large numbers $\frac{R_k^M}{M} \rightarrow \mu_k$ as $M \rightarrow \infty$. Hence, the FTE-MAB algorithm selects the arm with maximum expected reward if the arm is going to be exploited for infinitely many times and the cumulative reward is desired to be maximized.
\end{corollary}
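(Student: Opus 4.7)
The plan is to confirm the two implicit claims in the statement: (i) the argmax is invariant under scaling by $M$, and (ii) the scaled sum converges to the mean, so that the scaled version of the probability tends to $0$ or $1$ depending on whether the arm has the top expected reward. The first part is a one-line observation: since $M>0$, for any real vector inequality we have $R_k^M \geq \boldsymbol{R}_{-k}^M$ if and only if $R_k^M/M \geq \boldsymbol{R}_{-k}^M/M$ componentwise, and therefore the two probabilities in the claim are literally equal, so their argmax coincides.

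Next I would invoke the strong law of large numbers. Recall that $R_k^M = X_1+\cdots+X_M$ with $X_i$ i.i.d.\ from $f_k$ and $\mathds{E}[X_1]=\mu_k$, so $R_k^M/M \to \mu_k$ almost surely (and hence in probability) as $M\to\infty$. Assume, as is standard in the classical MAB formulation, that there is a unique optimal arm $k^\star_\infty = \argmax_k \mu_k$ with positive gap $\Delta_\mu = \mu_{k^\star_\infty} - \max_{k\neq k^\star_\infty}\mu_k > 0$. Fix $k\neq k^\star_\infty$. By applying convergence in probability to each of the $K$ sample means simultaneously (a finite union bound), the event $\{R_{k^\star_\infty}^M/M > \mu_{k^\star_\infty}-\Delta_\mu/3\}\cap\{R_j^M/M < \mu_j+\Delta_\mu/3\text{ for all }j\neq k^\star_\infty\}$ has probability tending to one. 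On this event $R_{k^\star_\infty}^M/M$ exceeds every other $R_j^M/M$, so $P(R_{k^\star_\infty}^M \geq \boldsymbol{R}_{-k^\star_\infty}^M) \to 1$. An analogous bound with the roles exchanged shows $P(R_k^M \geq \boldsymbol{R}_{-k}^M)\to 0$ for any $k\neq k^\star_\infty$.

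Combining these two limits, for all sufficiently large $M$ the maximizer of $p_k^M$ is exactly the arm $k^\star_\infty$ of largest expected reward, which is the objective of the classical cumulative-reward MAB problem; this yields the corollary. I would finish by observing that the FTE-MAB algorithm's estimates $\hat p_k^M$ converge to $p_k^M$ by Theorem~\ref{theorem2}, so the algorithm indeed identifies $k^\star_\infty$ in this limit.

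The only mildly subtle step is passing from marginal convergence in probability of each $R_k^M/M$ to the joint event controlling all $K$ arms at once; this is handled cleanly by a finite union bound, so there is no real obstacle. The case of ties among the $\mu_k$'s needs a brief comment, but the classical MAB problem already treats any top-$\mu$ arm as optimal, so no additional argument is required beyond noting that the statement then holds for each tied maximizer.
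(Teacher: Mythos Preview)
Your proposal is correct and follows exactly the approach sketched in the corollary's statement itself (the paper provides no separate proof): scale-invariance of the comparison followed by the law of large numbers to identify the limiting maximizer with $\argmax_k \mu_k$. You simply make the argument rigorous by quantifying the gap $\Delta_\mu$ and handling the joint event via a finite union bound, which is the natural way to fill in the details the paper leaves implicit.
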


%\subsection{Relation of Regret with Number of Pure Explorations}

%%%%%%%   \begin{figure}[thpb]
%%%%%%%      \centering
%%%%%%%      %\includegraphics[scale=1.0]{figurefile}
%%%%%%%      \caption{Inductance of oscillation winding on amorphous
%%%%%%%       magnetic core versus DC bias magnetic field}
%%%%%%%      \label{figurelabel}
%%%%%%%   \end{figure}

\begin{figure}[t]
\centering
\includegraphics[width=0.485\textwidth]{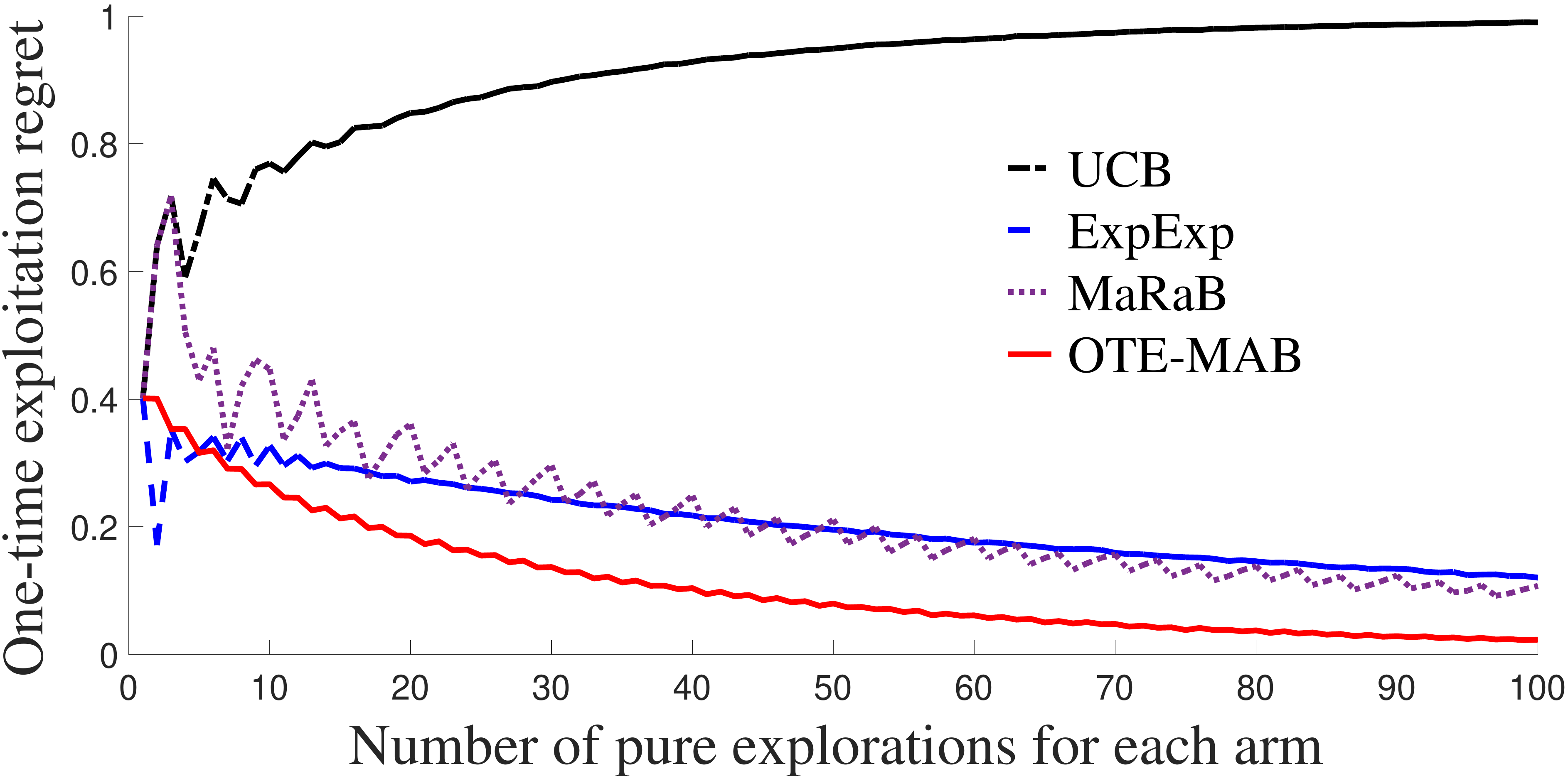}
\caption{Comparison of regret for OTE-MAB against the state-of-the-art algorithms for example \ref{example1}.}
\label{result1}
\end{figure}

\section{Simulation Results}
\label{results}
In this section, we report numerical simulations validating the theoretical results presented in this paper. We compare our proposed OTE-MAB algorithm with the Upper Confidence Bound (UCB) \cite{auer2002finite}, ExpExp \cite{sani2012risk}, and MaRaB \cite{galichet2013exploration} algorithms. Consider two arms with the reward distributions given in example \ref{example1}. The regret defined in Equation \eqref{regret_optimum} versus the number of pure explorations for each arm, $N$, is averaged over 100,000 runs. The result is plotted in Figure \ref{result1} and as is shown OTE-MAB outperforms the state-of-the-art algorithms for the purpose of risk-aversion in terms of the regret defined in this paper.
%Note that in this example, the arm with higher expected reward is less probable to reward us more than the other arm with lower expected reward, that is why .
Note that the UCB algorithm aims at selecting an arm that maximizes the expected received reward, but in example \ref{example1}, the arm with higher expected reward is less probable to have the highest reward, which is why the UCB algorithm performs poorly in this example.
However, in the following example where the arm that rewards more on expectation is also more probable to reward more, the UCB, ExpExp, and MaRaB algorithms perform as well as the OTE-MAB algorithm.
\begin{example}
Consider two arms with the following unknown independent reward distributions:
\[
\begin{aligned}
&f_1(u) = \alpha e^{-0.5(u - 2)^2} \cdot \mathds{1} \{ 0 \leq u \leq 10 \} \\
& f_2(u) = \beta e^{-0.5(u - 1)^2} \cdot \mathds{1} \{ 0 \leq u \leq 10 \},
\end{aligned}
\]
where $\alpha$ and $\beta$ are constants so that the two probability distribution functions integrate to one.
\label{example2}
\end{example}

Note that in example \ref{example2}, $\mathds{E}[R_1] > \mathds{E}[R_2]$ and $P(R_1 \geq R_2) > 0.5$. For this scenario, the regret defined in Equation \eqref{regret_optimum} versus the number of pure explorations for each arm, $N$, averaged over 100,000 runs is plotted in Figure \ref{result2}.

\begin{figure}[t]
\centering
\includegraphics[width=0.485\textwidth]{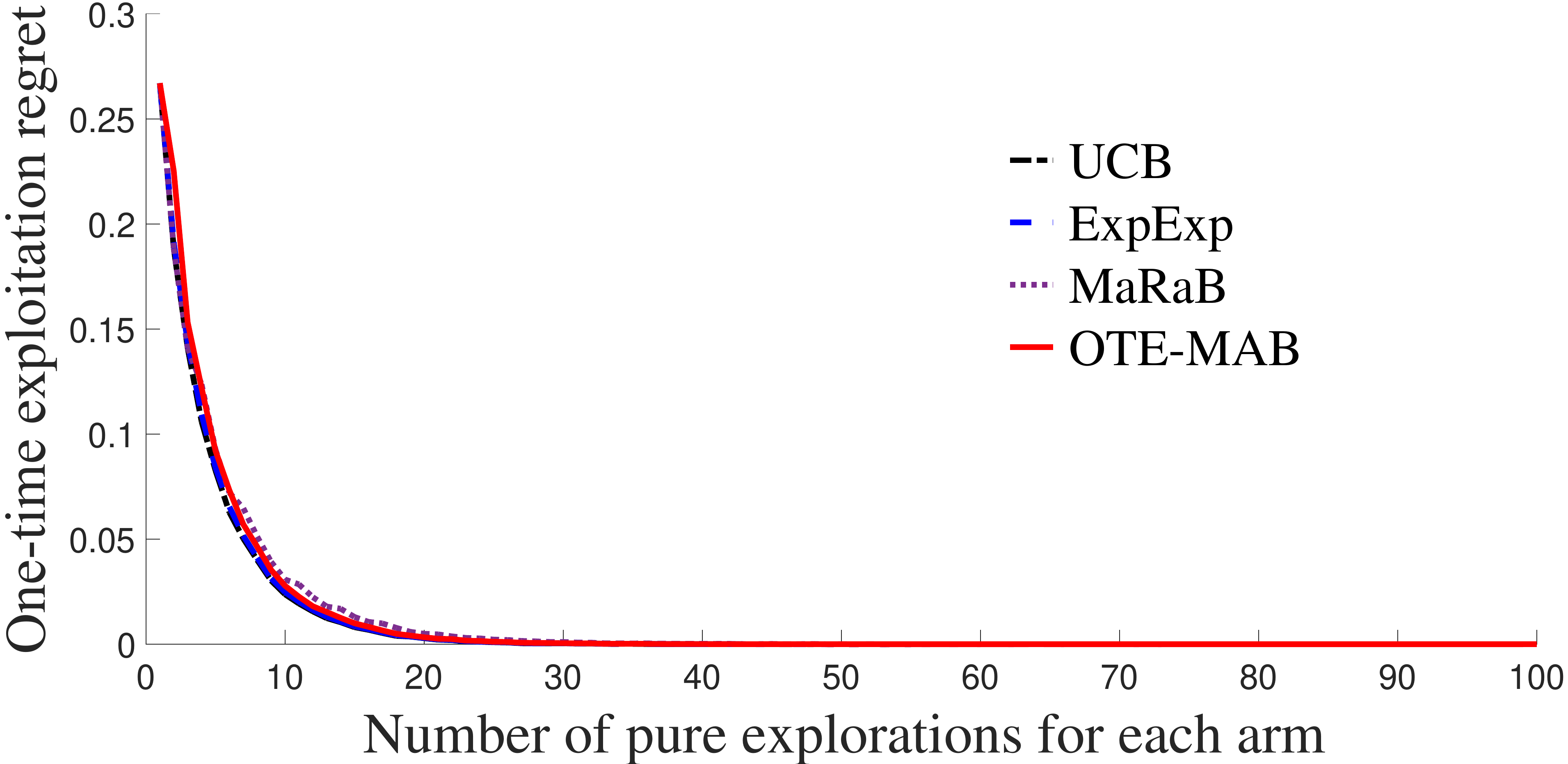}
\caption{Comparison of regret for OTE-MAB against the state-of-the-art algorithms for example \ref{example2}.}
\label{result2}
\end{figure}

In another experiment, the multi-armed bandit is simulated for example \ref{example1} and the probability that the selected arm has the higher reward is calculated over 500,000 runs for different algorithms. The result is shown in Figure \ref{result_probability}. This result confirms the motivation of our study on risk-averse finite-time exploitations in multi-armed bandits.
\begin{figure}[t]
\centering
\includegraphics[width=0.485\textwidth]{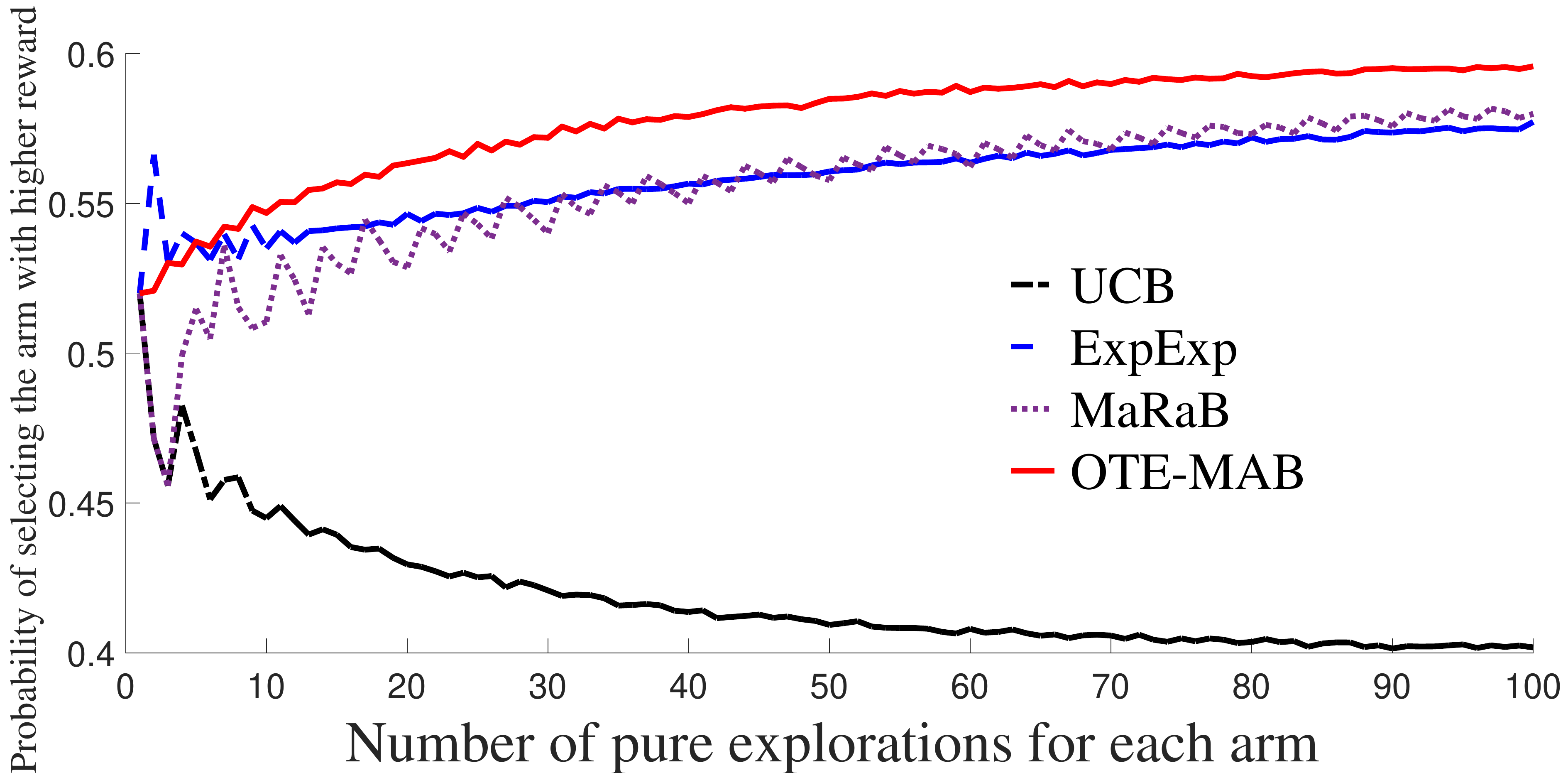}
\caption{Comparison of probability of selecting the arm with higher reward for OTE-MAB against the state-of-the-art algorithms for example \ref{example1}.}
\label{result_probability}
\end{figure}

In the above comparison of OTE-MAB with state-of-the-art algorithms, three different choices of hyper-parameters for the ExpExp and MaRaB algorithms are tested and the best performance is presented. However, note that the performances of these algorithms depend on the choice of hyper-parameter. In Figure \ref{result3}, the sensitivity of the performance of ExpExp algorithm with respect to the choice of hyper-parameter $\rho$ is depicted for example \ref{example1} and a third example where the variance of the best arm is larger than the variance of the arm with lower expected reward. The two plots are the averaged regret over 100,000 runs versus the value of $\rho$ for the ExpExp algorithm for two different multi-armed bandit problems when $N = 100$. As depicted in Figure \ref{result3}, a choice of $\rho$ can be good for one multi-armed bandit problem, but not good for another one.
Due to our observations, the sensitivity of the MaRaB algorithm to its hyper-parameter can even be more complex. Figure \ref{result4} depicts the averaged regret over 100,000 runs versus the value of MaRaB hyper-parameter, $\alpha$, when $N = 100$. This figure is plotted for example \ref{example1} and a fourth example where reward of the first arm has a truncated Gaussian distribution with mean three and variance two over the interval $[0, 10]$ and the second arm is the same as the one in example \ref{example1}.

\begin{figure}[t]
\centering
\includegraphics[width=0.485\textwidth]{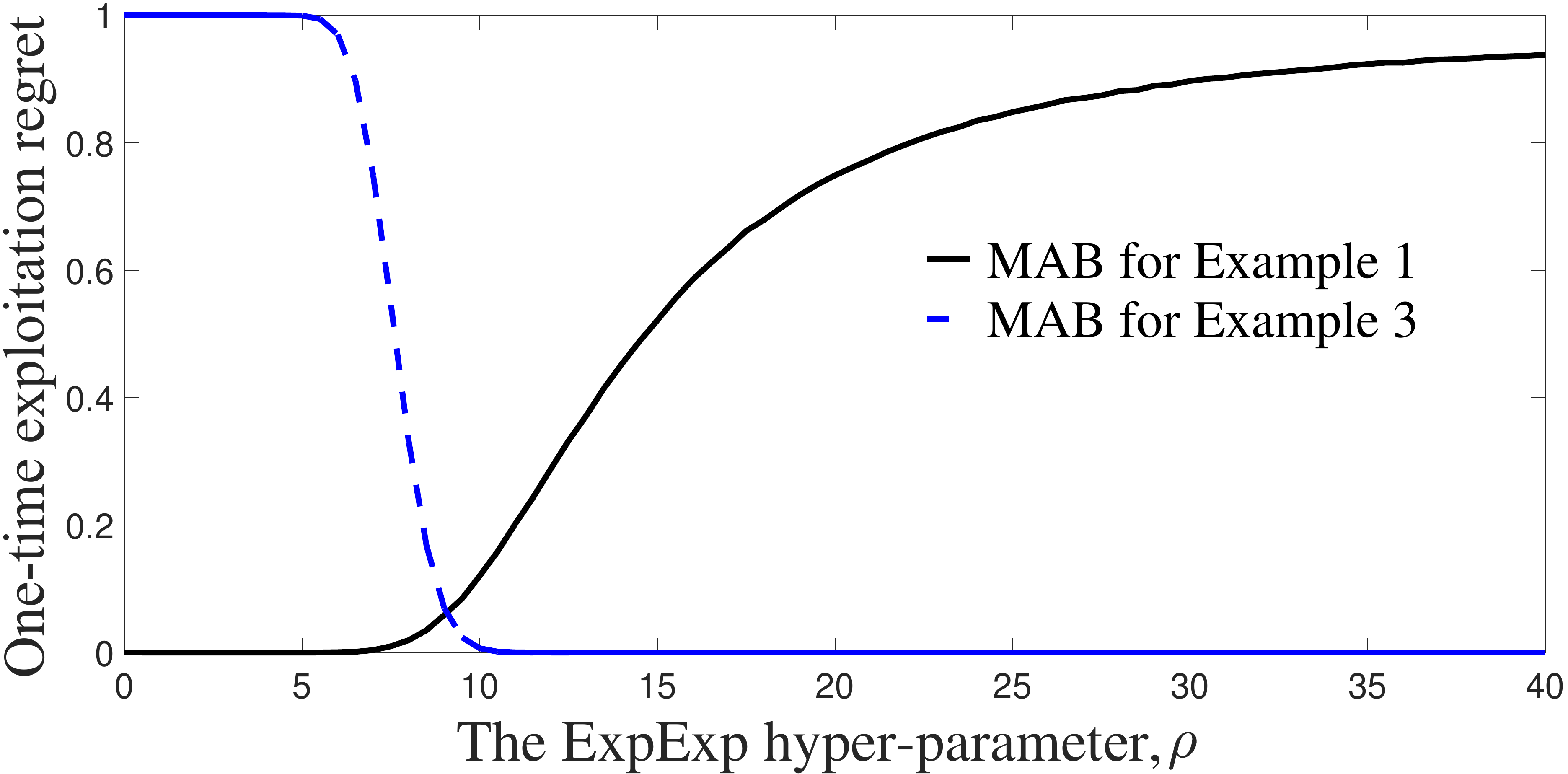}
\caption{Regret of the ExpExp algorithm versus the hyper-parameter $\rho$ for two examples.}
\label{result3}
\end{figure}

\begin{figure}[t]
\centering
\includegraphics[width=0.485\textwidth]{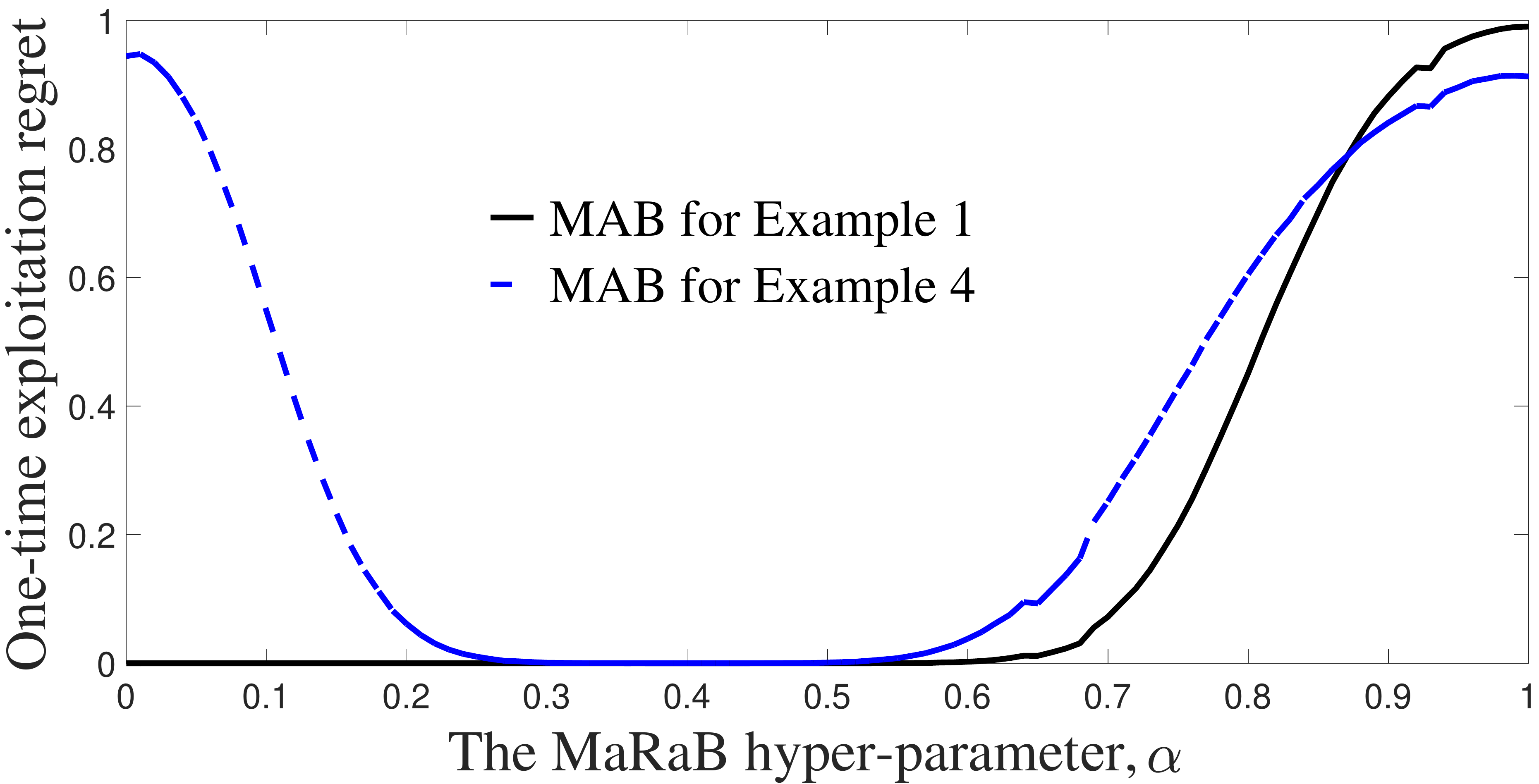}
\caption{Regret of the MaRaB algorithm versus the hyper-parameter $\alpha$ for two examples.}
\label{result4}
\end{figure}

In another experiment, we compare the minimum number of explorations needed to guarantee a bound on regret for two cases of one-time and two-time exploitations.
Theorems \ref{theorem1} and \ref{theorem2} suggest that for given $K, \epsilon_r$, and $\Delta p^* = \Delta p^*_M$, the upper bound of minimum number of explorations needed for $M$-time exploitations to guarantee that the regret is bounded by $\epsilon_r$ is $M$ times that of one-time exploitation.
We design two examples of two-armed bandits such that $\Delta p^* = \Delta p^*_2 = 0.28$ and plot the minimum number of explorations to guarantee bounded regret by $\epsilon_r$ in Figure \ref{figure_minimum}. The dashed line is the plot of the OTE-MAB algorithm multiplied by two which is close to the one related to the FTE-MAB algorithm for two-armed bandits. This observation supports our theoretical results.

\begin{figure}[t]
\centering
\includegraphics[width=0.485\textwidth]{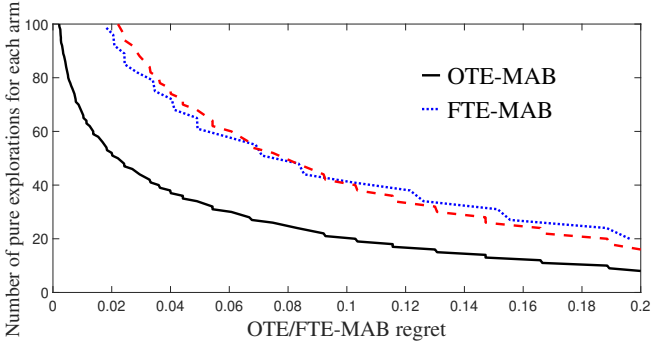}
\caption{The minimum number of explorations needed to guarantee a bound on regret for two cases of one-time and two-time exploitations.}
\label{figure_minimum}
\end{figure}

Taking a closer look at example \ref{example1}, we note that the regret defined in \eqref{regret_optimum} can generally be formulated as
\begin{equation}
\begin{aligned}
r^* = & \sum_{i = \lfloor \frac{N}{2} \rfloor + 1}^{N} {N \choose i} \cdot (1 - p_{k^*})^i \cdot p_{k^*}^{N - i} + \\
 & \frac12 \cdot {N \choose \frac{N}{2}} \cdot (1 - p_{k^*})^{\frac{N}{2}} \cdot p_{k^*}^{\frac{N}{2}} \cdot \mathds{1} \{ \text{$N$ is even} \}.
\end{aligned}
\label{exact_regret}
\end{equation}
Deriving the regret from the above equation, the same regret is found as the one generated by simulation for the OTE-MAB algorithm that is plotted in Figure \ref{result1}.
In this paper, the experimentation is assumed to have zero cost, which is often a valid assumption. However, if experimentation is time-consuming, there is a cost to postpone the exploitation of the best identified arm. For example, for more experimentation, a patient receives medication by delay or an investor keeps his/her money on hold with zero interest, both of which incur costs. Let such a cost be formulated by an increasing function $C(.)$, where $C(N)$ is the incurred cost of $N$ experiments. Then, a trade-off between more exploration for higher accuracy of best-arm identification and lower incurred cost of experimentation emerges. Such a trade-off can be formalized by solving
\begin{equation}
\underset{N}{\argmin} \ C(N) + \alpha \cdot \bar{r}^*,
\label{argmin_trade_off}
\end{equation}
where $\alpha$ is the cost-regret trade-off and $\bar{r}^*$ is calculated by Equation \eqref{exact_regret} based on an estimation of $p_{k^*}$ which is updated after each experiment. Figure \ref{trade_off} plots $C(N) + \alpha \cdot r^*$ under example \ref{example1} for $C(N) = \frac{N}{5}, \alpha = 100$, and the real value of $r^*$. The rigorous analysis of \eqref{argmin_trade_off} is postponed for future work \cite{yekkehkhany2019cost} and is beyond the scope of this paper.

\begin{figure}[t]
\centering
\includegraphics[width=0.49\textwidth]{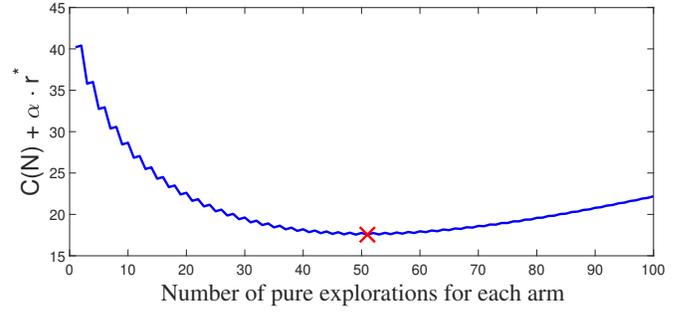}
\caption{Cost-regret trade-off is addressed by minimizing a linear combination of cost and regret.}
\label{trade_off}
\end{figure}

\section{Conclusion and Future Work}
\label{conclusion_future}
The focus of this work is on application domains, such as personalized health-care and one-time investment, where an experimentation phase of pure arm exploration is followed by a given finite number of exploitations of the best identified arm. We show through an example that the arm with maximum expected reward does not necessarily maximize the probability of receiving the maximum reward. The OTE-MAB and FTE-MAB algorithms are presented in this paper whose goals are to select the arm that maximizes the probability of receiving the maximum reward. We define a new notion of regret for our problem setup and find an upper bound on the minimum number of experiments that should be done to guarantee an upper bound on regret.
The cost of experimentation is assumed to be negligible in this paper, but if such an assumption is violated in an application domain, one can study the cost-regret trade-off as a promising future work in various deterministic and stochastic versions.

%\subsection{Conclusions}

%\subsection{Future Works}

\bibliographystyle{IEEEtran}
\bibliography{sigproc}

\appendix
\textit{Proof of Theorem \ref{theorem2}:}
Consider the Bernoulli random variables $B_k^M = \mathds{1}\{R_k^M \geq \boldsymbol{R}_{-k}^M\}$ and their unknown means $p_k^M = \mathds{E}[B_k^M] = P(R_k^M \geq \boldsymbol{R}_{-k}^M)$ for $k \in \mathcal{K}$.
%Define the set of optimum arms $\mathcal{K}^*$ as
%\begin{equation}
%\mathcal{K}^* = \{ k \in \mathcal{K}: | p_k - p_{k^*} | < \Delta p %\},
%\end{equation}
%where arm $k^*$ is defined in equation \eqref{OTE-MAB}.
% The parameter $\Delta p$ can also be viewed as the precision we require for the OTE-MAB algorithm to distinguish the maximum $p_k$.
Possessing $N$ independent observations from each of the $K$ independent or dependent arms in pure exploration, there are exactly $\lfloor \frac{N}{M} \rfloor$ independent samples for estimation of $p_k^M$.
Due to the same reasoning in the proof of Theorem \ref{theorem1}, the confidence interval for estimating $p_{k}^M$ based on Equation \eqref{p_k_hat_estimate_M} or \eqref{p_k_hat_estimate_MMM} with confidence level $1 - 2e^{-\frac{a^2}{2}}$ has the property that
\begin{equation}
P \left ( \hspace{-0.05cm} p_k^M \hspace{-0.05cm} \in \hspace{-0.05cm} \left ( \hat{p}_k^M - \frac{a}{2\sqrt{\lfloor \frac{N}{M} \rfloor}}, \hat{p}_k^M + \frac{a}{2\sqrt{\lfloor \frac{N}{M} \rfloor}} \right ) \hspace{-0.1cm} \right ) \hspace{-0.05cm} \geq \hspace{-0.05cm} 1 - 2e^{-\frac{a^2}{2}},
\label{confidence_interval}
\end{equation}
for all $k \in \mathcal{K}$.

In order to find a bound on regret, defined in Definition \ref{regret_definition} as $r_M(\Delta p) = P \left ( p_{k^*}^M - p_{\hat{k}}^M \geq \Delta p \right )$, note that
\begin{equation}
\begin{aligned}
& \left \{ p_{k^*}^M- p_{\hat{k}}^M \geq \Delta p \right \} \subseteq \\
& \left \{ \exists k \in \mathcal{K} \text{ s.t. } p_k^M \notin \left ( \hat{p}_k^M - \frac{\Delta p}{2}, \hat{p}_k^M + \frac{\Delta p}{2} \right ) \right \} \overset{(a)}{\subseteq} \\
& \left \{ \exists k \in \mathcal{K} \text{ s.t. } p_k^M \hspace{-0.5mm} \notin \hspace{-0.5mm} \left ( \hspace{-1mm} \hat{p}_k^M - \frac{a}{2\sqrt{\lfloor \frac{N}{M} \rfloor}}, \hat{p}_k^M + \frac{a}{2\sqrt{\lfloor \frac{N}{M} \rfloor}} \right ) \hspace{-1mm} \right \},
\end{aligned}
\end{equation}
where $(a)$ is true if $\frac{a}{2 \sqrt{\lfloor \frac{N}{M} \rfloor}} \leq \frac{\Delta p}{2}$. By using union bound and Equation \eqref{confidence_interval}, the probability of the right-hand side of the above equation can be bounded as follows, which results in the following bound on regret:
\begin{equation}
r_M(\Delta p) = P \left ( p_{k^*}^M - p_{\hat{k}}^M \geq \Delta p \right ) \leq 2Ke^{-\frac{a^2}{2}} = \epsilon_r.
\end{equation}

The above upper bound on regret is derived under the condition that $\frac{a}{2 \sqrt{\lfloor \frac{N}{M} \rfloor}} \leq \frac{\Delta p}{2}$, which by using $a^2 = \frac{K}{\epsilon_r}$ and simple algebraic calculations is equivalent to $\lfloor \frac{N}{M} \rfloor \geq \frac{2 \ln \left ( \frac{2K}{\epsilon_r} \right )}{{\Delta p}^2}$.
%It is not hard to see that for the case that rewards of different arms are dependent, we need more exploration for regret of $\epsilon_r$ as $N > \frac{K}{\epsilon_r \cdot {\Delta p}^2}$, so independence of arms can be of great benefit to us in terms of having a lower regret with a fixed number of explorations.

\end{document}